\newtheorem{thm}{\bf Theorem}
\newtheorem{lemma}{\bf Lemma}
\newcommand{\ourmethod}{AdaZero}
\title{The Exploration-Exploitation Dilemma Revisited:\\An Entropy Perspective}
 \author{
     Renye Yan\textsuperscript{\rm 1}\footnotemark[1], 
     Yaozhong Gan\textsuperscript{\rm 2}\thanks{Equal contribution.}, 
     You Wu\textsuperscript{\rm 3},  
     Ling Liang\textsuperscript{\rm 1}, 
     Junliang Xing\textsuperscript{\rm 4}, 
     Yimao Cai\textsuperscript{\rm 1}\thanks{Correspondence author.}, 
     Ru Huang\textsuperscript{\rm 1}
 }
\begin{document}
\maketitle

\begin{abstract}

The imbalance of exploration and exploitation has long been a significant challenge in reinforcement learning. In policy optimization, excessive reliance on exploration reduces learning efficiency, while over-dependence on exploitation might trap agents in local optima. This paper revisits the exploration-exploitation dilemma from the perspective of entropy by revealing the relationship between entropy and the dynamic adaptive process of exploration and exploitation. Based on this theoretical insight, we establish an end-to-end adaptive framework called \emph{\ourmethod{}}, which automatically determines whether to explore or to exploit as well as their balance of strength.
Experiments show that \ourmethod{} significantly outperforms baseline models across various Atari and MuJoCo environments with only a single setting. Especially in the challenging environment of Montezuma, \ourmethod{} boosts the final returns by up to fifteen times. Moreover, we conduct a series of visualization analyses to reveal the dynamics of our self-adaptive mechanism, demonstrating how entropy reflects and changes with respect to the agent's performance and adaptive process.

\end{abstract}

\section{Introduction}
In reinforcement learning (RL), the objective of policy optimization is to directly optimize a policy's parameters by maximizing the cumulative rewards~\cite{ahmed2019understanding,schulman2015trust,schulman2017proximal,bellemare2016unifying}. Due to the sparsity and delay of rewards~\cite{TNNLS23DRLSurvey, AAAI21Kang}, the policy optimization process is usually ineffective and hindered. 
To tackle this challenge, methods are proposed to improve exploration or exploitation respectively.
On the one hand, exploration-centric methods enhance the formation of new policies by expanding the search range of actions and/or states. For example, methods based on maximum entropy~\cite{ahmed2019understanding,haarnoja2018soft,zhang2024entropy,han2021max} encourages exploration by increasing randomness, and intrinsic reward methods~\cite{TAMD10IntrinsicMotivation, ICML19EMI, NIPS20IMExploration, IJCAI20IntrinsicExploration, NIPS22IELA, NIPS22E3B, ICML22MSEExploration} leverage state coverage to drive exploration. On the other hand, exploitation-centric methods~\cite{oh2018self,schulman2017proximal} execute the currently optimal policy based on the experience already learned by the agent, aiming to straightforwardly obtain the maximum cumulative returns.


While the exploration-centric and exploitation-centric methods both have made some progress, previous works pay less attention to the synergy of exploration and exploitation, which is crucial for policy optimization.
Balancing exploration and exploitation presents a complex dilemma in practice. Agents must consider multiple factors, including the environment's dynamism, local optima traps, and reward delays. Balancing short-term returns with long-term gains within limited time and resources is exceedingly challenging.
An ideal policy optimization process is expected to effectively switch between exploration and exploitation policies and adaptively balance the two effects.

A few attempts have been made to address the imbalance problem of exploration and exploitation, including decaying intrinsic reward after a certain number of steps~\cite{ICLR20NeverGiveUp,yan2023mnemonic}, and decoupling the training of exploration and exploitation policy~\cite{liu2021decoupling,colas2018gep}. Essentially, these methods are still exploration-centric, since the intrinsic rewards will never diminish to zero. In fact, as the extrinsic rewards are sparse, the non-zero intrinsic rewards, no longer how small, will dominate the training process at most of the time. Simply weakening the strength of exploration in a rigid manner will result in a sub-optimal policy.

In this paper, we revisit the exploration-exploitation dilemma from an entropy perspective. We start by theoretically revealing the relationship between entropy and intrinsic rewards in policy optimization, showing them changing synchronously in certain conditions.
From this viewpoint, we can treat the increase and decrease of entropy as proxies to indicate the agent's ability of exploration and exploitation in the current environment. We argue the presence and amount of intrinsic rewards should be determined by the agent's level of mastery about the environment. Hence, we formulate a dynamic adaptive process of policy optimization and derive several representative scenarios characterizing its self-adaptive capability of exploration and exploitation.


Based on these theoretical insights, we establish an end-to-end adaptive framework \emph{\ourmethod{}}. \ourmethod{} continues to evaluate the agent's mastery level in each state via an evaluation network and adaptively determines the balance of strength of exploration and exploitation.
The state autoencoder receives real-time state images and outputs the reconstruction error as intrinsic rewards to encourage exploration. Meanwhile, the reconstructed state image generated by the autoencoder, which represents the agent's mastery level of the current state, is provided to an evaluation network to determine the strength of exploration and exploitation.
Guided by this mechanism, agents can automatically adapt to environmental changes and decide whether to explore or to exploit, as well as the strength of both, thus breaking out from the dilemma of imbalanced exploration-exploitation.
The adaptive mechanism of \ourmethod{} is entirely realized in an end-to-end manner without any need to manually design a decaying scheme.

To provide empirical support for our theoretical findings and to validate the effectiveness of our proposed self-adaptive mechanism, we experiment with \ourmethod{} on 63 Atari and
MuJoCo tasks with continuous and discrete action state spaces. The results show that without any customized reward design or tuning of hyperparameters for each environment, \ourmethod{} outperforms previous methods by a large margin across simple to difficult environments, demonstrating the superiority of our self-adaptive mechanism. We also perform a series of visualization analyses to reveal the functionality of entropy in policy optimization and how \ourmethod{}'s automatic adaptive exploration and exploitation effectively works.

\begin{figure*}
    \centering
    \includegraphics[width=0.9\linewidth]{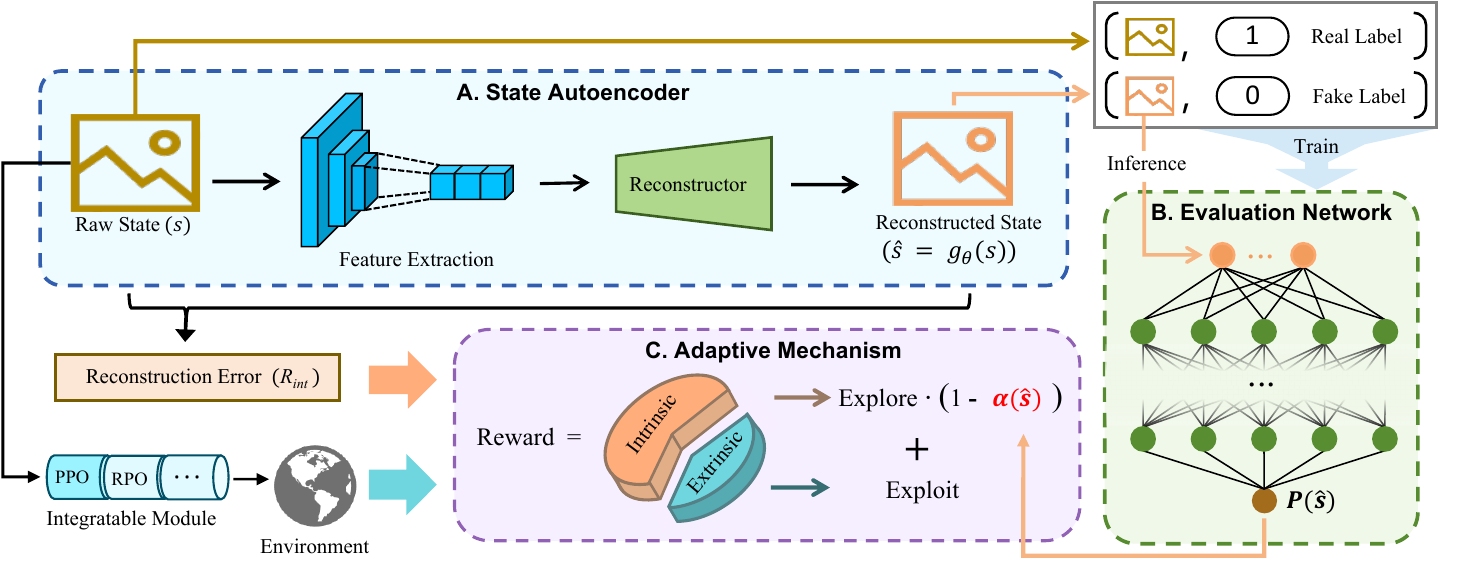}
    \caption{\textbf{\ourmethod{}'s Framework}. \ourmethod{} consists of three main components: (A) State Autoencoder, (B) Evaluation Network for level of mastery, and (C) Adaptive Mechanism. The state autoencoder encodes and reconstructs states in raw images, where the reconstruction errors work as the driving force for the agent's exploration. The mastery evaluation network evaluates the reconstructed states and outputs the probability of $\hat{s}$ being real images as the balance factor $\alpha(\hat{s})$. Finally, $\alpha(\hat{s})$ is used in the adaptive mechanism to dynamically balance exploration and exploitation.}
    \label{framework}
\end{figure*}

\section{Preliminaries}
\paragraph{Markov Decision Process (MDP).}
An MDP~\cite{sutton1999reinforcement,puterman2014markov} is described by the tuple $\left\langle \mathcal{S}, \mathcal{A}, P, R, \gamma \right\rangle$, where $\mathcal{S}$ and $\mathcal{A}$ represent state and action spaces respectively. The function $P$ is the transition probability function, indicating the likelihood of moving from one state to another following an action. The function $R$ is the reward function and is supposed to be non-negative. The discount factor $\gamma \in [0, 1)$ modifies the value of future rewards. The agent's interactions within the MDP generate a trajectory $\tau$ of states, actions, and rewards: $s_0, a_0, R(s_0, a_0), \cdots, s_t, a_t, R(s_t, a_t), \cdots$. The state-action value function $Q^{\pi}(s_t, a_t)$ is defined as follows:
\begin{equation}
\label{q}
    Q^{\pi}(s_t, a_t) = \mathbb{E}_{\tau \sim \pi}\left[\sum_{i=0}^{\infty} \gamma^i R(s_{t+i}, a_{t+i}) \mid s_t, a_t\right]. 
\end{equation}

The policy is defined in the softmax form of the Q value:
\begin{equation}
\label{policy}
    \pi(a_i|s) = \frac{e^{Q(s, a_i)}}{\sum_{k}e^{Q(s, a_k)}}, i= 1, 2, \cdots, n.
\end{equation}

In order to guide the agent to make decisions that maximize long-term gains, the goal of policy optimization is to learn a policy $\pi$ that maximizes the expected total discounted reward $\eta(\pi)$, defined as follows:
\[ \eta(\pi) = \mathbb{E}_{\tau \sim \pi}\left[\sum_{i=0}^{\infty} \gamma^i R(s_{i}, a_{i})\right]. \]


\paragraph{Intrinsic-Based RL.}
Extending the traditional MDP framework, intrinsic rewards are introduced to help agents explore in sparse rewards tasks. The total reward function $R_{\text{total}}(s, a)$ combines extrinsic rewards $R_{\text{ext}}$ and intrinsic rewards $R_{\text{int}}$ and is defined as follows:
\begin{equation*}
    R_{\text{total}}(s, a)= R_{\text{ext}}(s, a) + R_{\text{int}}(s, a).
\end{equation*}

The Q-function is updated with intrinsic rewards added:
\begin{equation}\label{qtotal}
\begin{aligned}
Q_{\text{total}}(s,a)&=E_{\tau \sim \pi}\left[\sum_{i}\gamma^i (R_{\text{ext}}(s_i,a_i)+R_{\text{int}}(s_i,a_i))|s,a\right]\\&\triangleq Q_{\text{ext}}(s,a)+\delta(s, a),
\end{aligned}
\end{equation}
where $Q_{\text{ext}}(s, a) = E_{\tau}\left[\sum_{i}\gamma^i R_{\text{ext}}(s_i,a_i)|s,a\right]$, and $\delta(s, a) = E_{\tau}\left[\sum_{i}\gamma^i R_{\text{int}}(s_i,a_i)|s,a\right]$ is the accumulated return with only intrinsic rewards.


\section{Method}

An ideal optimization process should cyclically alternate between exploration and exploitation according to actual development conditions.
Particularly, reasonable exploitation can help the agent to enter new states and hence bring more in-depth exploration. 
Motivated by the above observations, we revisit the classic problem - the exploration-exploitation dilemma in RL - from the perspective of entropy.
In Subsection~\ref{adapt}, we investigate the relationship of two prominent factors associated to exploration, i.e., entropy and intrinsic rewards.
We theoretically reveal that entropy and intrinsic rewards change synchronously under mild conditions, and hence extend the Bellman equation to formulate a self-adaptive mechanism. 
In Subsection~\ref{ourframework}, we instantiate our adaptive framework \ourmethod{}, which is entirely based on end-to-end networks.

\label{method}
\subsection{Exploration-Exploitation Adaptation from
the Perspective of Entropy}
\label{adapt}

This subsection investigates the relationship between entropy and intrinsic rewards.
Without loss of generality, we consider the action space of instrinsic-based RL to consist of two actions, $a_1$ and $a_2$, and suppose $a_1$ is the optimal action and $a_2$ is the suboptimal action, i.e., $\delta(s, a_1) \leq \delta(s, a_2)$.

According to Eqn.~(\ref{policy}), we can define the policies $\pi_{\text{ext}}$ and $\pi_{\text{total}}$, which are the softmax over $Q_{\text{ext}}$ and $Q_{\text{total}}$. Next, we first show that entropy and intrinsic rewards change synchronously in mild conditions. Then, we derive an adaptive mechanism of exploration and exploitation.

\begin{lemma}
\label{thm1}
    If $0 \leq \delta(s, a_2) - \delta(s, a_1) \leq 2(Q_{\text{ext}}(s, a_1) - Q_{\text{ext}}(s, a_2))$ holds, we have $$ H(\pi_{\text{ext}}|s) \leq H(\pi_{\text{total}}|s). $$
\end{lemma}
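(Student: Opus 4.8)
The plan is to collapse both conditional entropies onto a single scalar --- the gap between the two $Q$-values --- and then exploit the symmetry and unimodality of the binary-entropy function. Since the action space has only two actions, by Eqn.~(\ref{policy}) each policy at $s$ is a Bernoulli distribution determined entirely by its logit difference. I would set $d_{\text{ext}} = Q_{\text{ext}}(s,a_1) - Q_{\text{ext}}(s,a_2)$ and $d_{\text{total}} = Q_{\text{total}}(s,a_1) - Q_{\text{total}}(s,a_2)$, so that $\pi_{\text{ext}}(a_1|s) = 1/(1+e^{-d_{\text{ext}}})$ and likewise for $\pi_{\text{total}}$. Writing $h(p) = -p\ln p - (1-p)\ln(1-p)$ for the binary entropy, both conditional entropies take the form $H(\pi|s) = G(d) := h\!\left(1/(1+e^{-d})\right)$, a function of the single variable $d$.

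Next I would record the two structural facts about $G$ that drive the whole argument: $G$ is even (swapping the two actions reflects $d \mapsto -d$ while leaving the entropy unchanged), and $G$ is decreasing in $|d|$ (because $h$ peaks at $p = 1/2$, corresponding to $d = 0$, while the logistic map $d \mapsto 1/(1+e^{-d})$ is monotone). Consequently, comparing $H(\pi_{\text{ext}}|s)$ with $H(\pi_{\text{total}}|s)$ reduces entirely to comparing $|d_{\text{ext}}|$ with $|d_{\text{total}}|$: the policy whose logit gap is closer to zero carries the larger entropy.

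It then remains to convert the hypothesis into such a magnitude comparison. Using $Q_{\text{total}} = Q_{\text{ext}} + \delta$ from Eqn.~(\ref{qtotal}), I would write $d_{\text{total}} = d_{\text{ext}} - \Delta$ with $\Delta := \delta(s,a_2) - \delta(s,a_1) \ge 0$. The right-hand inequality of the hypothesis reads $\Delta \le 2 d_{\text{ext}}$, which in particular forces $d_{\text{ext}} \ge 0$, so $|d_{\text{ext}}| = d_{\text{ext}}$. The target $|d_{\text{total}}| \le |d_{\text{ext}}|$ is exactly the pair of bounds $-d_{\text{ext}} \le d_{\text{ext}} - \Delta \le d_{\text{ext}}$; the upper bound follows from $\Delta \ge 0$, and the lower bound is precisely $\Delta \le 2 d_{\text{ext}}$. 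Hence $|d_{\text{total}}| \le |d_{\text{ext}}|$, and the monotonicity of $G$ yields $H(\pi_{\text{ext}}|s) = G(d_{\text{ext}}) \le G(d_{\text{total}}) = H(\pi_{\text{total}}|s)$.

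The step I expect to require the most care is the monotonicity of $G$ in $|d|$ combined with correctly tracking the sign of the shifted gap. The factor of $2$ in the hypothesis is doing genuine work: it is exactly the slack that keeps the reflected gap $d_{\text{ext}} - \Delta$ --- which turns negative once $\Delta > d_{\text{ext}}$ --- within distance $d_{\text{ext}}$ of the origin. I would therefore verify the reflected regime $\Delta \in (d_{\text{ext}}, 2 d_{\text{ext}}]$ explicitly rather than assume it behaves symmetrically to the regime $\Delta \in [0, d_{\text{ext}}]$.
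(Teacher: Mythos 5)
Your proof is correct and is essentially the paper's own argument in logit coordinates: your bounds $-d_{\text{ext}} \le d_{\text{total}} \le d_{\text{ext}}$ are exactly the paper's probability inequalities $\pi_{\text{total}}(a_2|s) \le \pi_{\text{ext}}(a_1|s)$ and $\pi_{\text{total}}(a_1|s) \le \pi_{\text{ext}}(a_1|s)$, and both proofs then invoke the same symmetry and unimodality of the binary entropy (the paper's Appendix on monotonicity). Your explicit treatment of the reflected regime $\Delta \in (d_{\text{ext}}, 2d_{\text{ext}}]$ is a welcome clarification of a step the paper leaves terse, but it is not a different route.
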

\begin{proof}
    By definition, we have

    $ \pi_{\text{ext}}(a_i|s) = \frac{e^{Q_{\text{ext}}(s, a_i)}}{e^{Q_{\text{ext}}(s, a_1)} + e^{Q_{\text{ext}}(s, a_2)}} $ and $ \pi_{\text{total}}(a_i|s) = \frac{e^{Q_{\text{ext}}(s, a_i) + \delta(s, a_i)}}{e^{Q_{\text{ext}}(s, a_1) + \delta(s, a_1)} + e^{Q_{\text{ext}}(s, a_2) + \delta(s, a_2)}} $.
    
    For the optimal action $a_1$, we have
    \begin{align*}
        \pi_{\text{total}}(a_1|s) = &\frac{e^{Q_{\text{ext}}(s, a_1) + \delta(s, a_1)}}{e^{Q_{\text{ext}}(s, a_1) + \delta(s, a_1)} + e^{Q_{\text{ext}}(s, a_2) + \delta(s, a_2)}}\\
        = &\frac{e^{Q_{\text{ext}}(s, a_1)}}{e^{Q_{\text{ext}}(s, a_1)} + e^{Q_{\text{ext}}(s, a_2)} \cdot e^{\delta(s, a_2) - \delta(s, a_1)}}\\
        \leq &\pi_{\text{ext}}(a_1|s).
    \end{align*}

    For the suboptimal action $a_2$, we have $ \pi_{\text{total}}(a_2|s) \geq \pi_{\text{ext}}(a_2|s) $ according to $ \pi(a_1|s) + \pi(a_2|s) = 1 $.
    
    Under $ 0 \leq \delta(s, a_2) - \delta(s, a_1) \leq 2[Q_{\text{ext}}(s, a_1) - Q_{\text{ext}}(s, a_2)] $,
    we get $ \pi_{\text{total}}(a_2|s) \leq \pi_{\text{ext}}(a_1|s) $.

    According to the monotonicity of entropy (Appendix~\ref{monotonicity}), the entropy of the policy considering two actions increases monotonically with respect to $\pi(a_i|s)$ in the interval (0, 0.5) and decreases monotonically in the interval (0.5, 1).
    Combining the properties of entropy with respect to policy probabilities, we obtain $ H(\pi_{\text{ext}}|s) \leq H(\pi_{\text{total}}|s) $.
\end{proof}

\paragraph{Self-Adaptive Bellman Equation.}
From lemma~\ref{thm1}, we establish a connection between entropy and intrinsic rewards.
Next, we extend the Bellman equation to formulate a new dynamic adaptive mechanism for exploration and exploitation. We define the self-adaptive equation as follows:
\begin{equation}
\label{eqn6}
    \begin{aligned}
        &Q_{\text{total}}(s,a)\\
=&E_{\tau}\left[\sum_{i}\gamma^i (R_{\text{ext}}(s_i,a_i)+ (1-\alpha(s_i))R_{\text{int}}(s_i,a_i))|s,a\right]\\
\triangleq &Q_{\text{ext}}(s,a)+ \hat{\delta}(s, a),
    \end{aligned}
\end{equation}

where $Q_{\text{ext}}(s, a)$ is the same as in Eqn. (\ref{qtotal}), $\hat{\delta}(s, a) = E_{\tau}\left[\sum_{i}\gamma^i (1-\alpha(s_i))R_{\text{int}}(s_i,a_i)|s,a\right]$, and $\alpha(s_i) \in [0,1]$ is a function of state $s_i$ indicating the mastery level of $s_i$.

Based on Eqn.~(\ref{eqn6}), we can derive three typical scenarios characterizing the exploration-exploitation adaptive mechanism from the perspective of entropy, which also provides a theoretical explanation for how entropy and intrinsic rewards expedite exploration.

\begin{thm}
\label{thm2}
    Under conditions of lemma~\ref{thm1} and different values of $\alpha$ function,
    for any $s$, we have $$ H(\pi_{\text{ext}}|s) \leq H(\pi_{\text{total}}|s) \text{ or } H(\pi_{\text{ext}}|s) > H(\pi_{\text{total}}|s) .$$
\end{thm}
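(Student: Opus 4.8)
The plan is to run the reduction of Lemma~\ref{thm1} again, but with the modulated accumulation $\hat{\delta}$ from the self-adaptive equation~(\ref{eqn6}) in place of $\delta$, and then to split into cases on the sign and size of the gap $\hat{\delta}(s,a_2)-\hat{\delta}(s,a_1)$, which is exactly the quantity the $\alpha$ function governs. First I would rewrite the optimal-action probability as
\[\pi_{\text{total}}(a_1\mid s)=\frac{e^{Q_{\text{ext}}(s,a_1)}}{e^{Q_{\text{ext}}(s,a_1)}+e^{Q_{\text{ext}}(s,a_2)}\,e^{\hat{\delta}(s,a_2)-\hat{\delta}(s,a_1)}},\]
which is identical to the expression in Lemma~\ref{thm1} except that the difference of the $\delta$'s is replaced by the difference of the $\hat{\delta}$'s. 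Because $a_1$ is extrinsically optimal we have $Q_{\text{ext}}(s,a_1)>Q_{\text{ext}}(s,a_2)$, so $\pi_{\text{ext}}(a_1\mid s)>\tfrac12$, and the position of $\pi_{\text{total}}(a_1\mid s)$ relative to this baseline is decided entirely by the sign of the exponent $\hat{\delta}(s,a_2)-\hat{\delta}(s,a_1)$.

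Next I would perform the case split and invoke the binary-entropy monotonicity from Appendix~\ref{monotonicity}. When $0\le \hat{\delta}(s,a_2)-\hat{\delta}(s,a_1)\le 2\big(Q_{\text{ext}}(s,a_1)-Q_{\text{ext}}(s,a_2)\big)$, the exponential factor is at least $1$, so $\pi_{\text{total}}$ moves mass from $a_1$ toward $a_2$ while the condition keeps the larger of the two total-probabilities no bigger than $\pi_{\text{ext}}(a_1\mid s)$; since entropy increases on $(0,\tfrac12)$ and decreases on $(\tfrac12,1)$, this gives $H(\pi_{\text{ext}}\mid s)\le H(\pi_{\text{total}}\mid s)$, reproducing Lemma~\ref{thm1}. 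When instead $\hat{\delta}(s,a_2)-\hat{\delta}(s,a_1)<0$, the exponential factor drops below $1$, forcing $\pi_{\text{total}}(a_1\mid s)>\pi_{\text{ext}}(a_1\mid s)>\tfrac12$; on the decreasing branch this yields the strict reversal $H(\pi_{\text{ext}}\mid s)>H(\pi_{\text{total}}\mid s)$.

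Finally I would tie the two regimes to ``different values of the $\alpha$ function.'' Expanding $\hat{\delta}(s,a)=E_{\tau}\big[\sum_i\gamma^i(1-\alpha(s_i))R_{\text{int}}(s_i,a_i)\mid s,a\big]$, a spatially uniform $\alpha$ only rescales the Lemma~\ref{thm1} gap by $(1-\alpha)$ and preserves its nonnegative sign, landing in the first case; but an $\alpha$ that is larger along the branch following the suboptimal action $a_2$---the behaviour the mastery interpretation predicts once those states are well learned---suppresses $\hat{\delta}(s,a_2)$ more than $\hat{\delta}(s,a_1)$ and can push the gap negative, landing in the second case. The main obstacle I expect is exactly this last step: since $\hat{\delta}$ is a discounted expectation over whole trajectories rather than a one-step reward, making the sign flip rigorous means controlling how the state-dependent weights $(1-\alpha(s_i))$ accumulate differently along the $a_1$ and $a_2$ branches, instead of just comparing single-step intrinsic rewards as in Lemma~\ref{thm1}. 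Once the exponent's sign is pinned down, both entropy comparisons follow immediately from monotonicity.
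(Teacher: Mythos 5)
Your proposal is correct and follows essentially the paper's own route: both arguments reduce the entropy comparison to the sign of $\hat{\delta}(s,a_2)-\hat{\delta}(s,a_1)$ under different choices of $\alpha$ and then invoke the binary-entropy monotonicity of Appendix~\ref{monotonicity}, and your two sign regimes subsume the paper's three enumerated scenarios ($\alpha\equiv 0$ gives the nonnegative gap handled by Lemma~\ref{thm1}; the paper's Case II stipulates an $\alpha$ making $\hat{\delta}(s,a_2)=0$, so the gap is $-\hat{\delta}(s,a_1)<0$; and $\alpha\equiv 1$ gives a zero gap and entropy equality). The ``obstacle'' you flag---rigorously exhibiting an $\alpha$ whose trajectory-level weighting flips the gap's sign---is simply sidestepped in the paper, which posits such an $\alpha$ by fiat in its Case II, so your treatment is if anything more careful there (one small quibble: Lemma~\ref{thm1}'s hypotheses only force $\pi_{\text{ext}}(a_1|s)\geq\tfrac{1}{2}$, not strictly greater, but in the borderline case $\pi_{\text{ext}}$ sits at the entropy maximum and a negative gap still yields the strict decrease, so your conclusion stands).
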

\begin{figure*}
    \centering
    \includegraphics[width=0.99\linewidth]{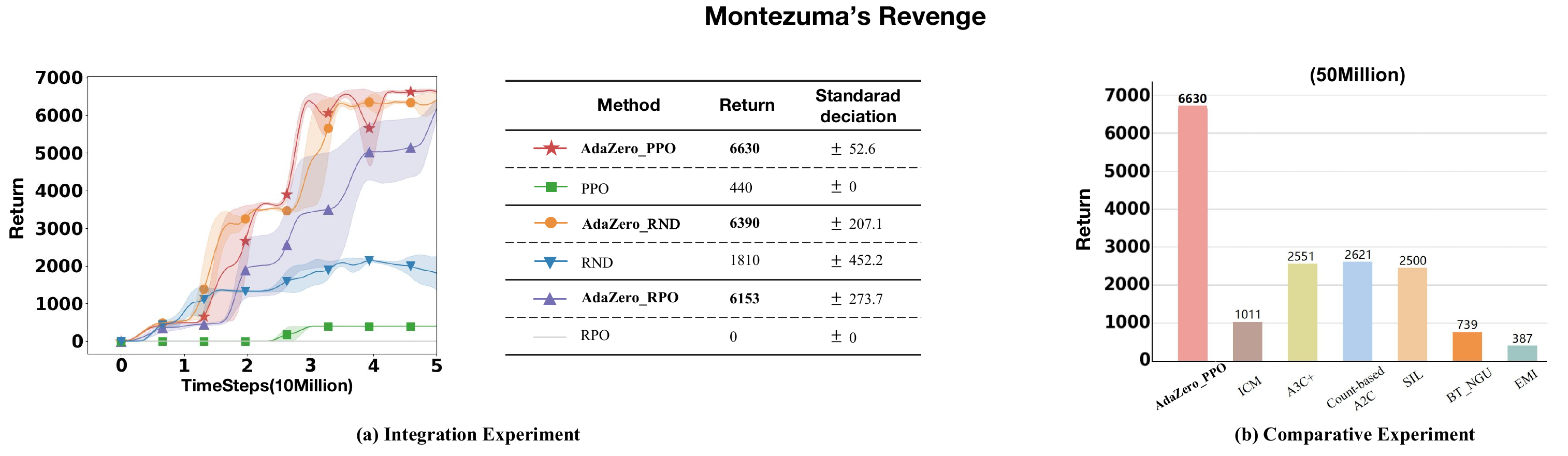}
    \caption{Main experiments in the most challenging \textbf{Montezuma's Revenge}. (a) We integrate \ourmethod{} into three representative RL methods and show that \ourmethod{} can bring significant improvements; (b) Our method also presents an advantageous performance compared with other advanced baselines. \textbf{Equipped with \ourmethod{}, RND reached the score published in the original paper within only one-tenth of the training steps}.}
    \label{main_result}
\end{figure*}

\begin{proof}
    According to Eqn.~(\ref{eqn6}), we have
    \begin{equation*}
    \hat{\delta}(s, a) = (1-\alpha(s))R_{\text{int}}(s,a)+\gamma \mathbb{E}_{s',a'|s,a}[\hat{\delta}(s', a')].
    \end{equation*}
    Depending on the mastery level of the states $\alpha(s)$, there will be different scenarios regarding the relationship between exploration and exploitation. We mainly discuss the following three typical cases:

    \emph{Case I - Exploration Dominant}: for any $s$, when $\alpha(s) \equiv 0$, $Q_{\text{total}}(s, a_1) = Q_{\text{ext}}(s, a_1) + \delta(s, a_1)$, and $Q_{\text{total}}(s, a_2) = Q_{\text{ext}}(s, a_2) + \delta(s, a_2)$. From lemma~\ref{thm1}, we know $H(\pi_{\text{ext}}|s) < H(\pi_{\text{total}}|s)$, i.e., the entropy increases. In this scenario, the strength of exploration reaches the peak level, which means our method exhibits stronger exploratory capabilities than methods without intrinsic rewards.

    \emph{Case II - Adaptive Exploration-Exploitation}: $\alpha(s)$ satisfies $Q_{\text{total}}(s, a_1) = Q_{\text{ext}}(s, a_1) + \hat{\delta}(s, a_1)$ and $Q_{\text{total}}(s, a_2) = Q_{\text{ext}}(s, a_2)$. This means $H(\pi_{\text{ext}}|s) > H(\pi_{\text{total}}|s)$, i.e., the probability of optimal policy increases while entropy starts to decrease. In this scenario, the degree of exploitation adaptively increases while the level of exploration decreases. 

    \emph{Case III - Exploitation Dominant}: for any $s$, when $\alpha(s) \equiv 1$, we have $Q_{\text{total}}(s, a_1) = Q_{ext}(s, a_1)$, and $Q_{\text{total}}(s, a_2) = Q_{\text{ext}}(s, a_2)$. This means the entropy remains unchanged, i.e., $H(\pi_{\text{ext}}|s) = H(\pi_{\text{total}}|s)$. In this scenario, the strength of exploration is zeroed out and exploitation is dominant.
\end{proof}

From the above theorem, we can see that as $\alpha(s)$ changes, the entropy of our algorithm can get higher or lower than that of $\pi_{\text{ext}}$. This implies that our method can conduct a certain level of exploration while leveraging existing experience, compared with traditional RL methods like PPO~\cite{schulman2017proximal,mnih2015human}. Besides, in contrast to existing exploration-centric methods based on intrinsic rewards or maximum entropy, our method can alternate between exploration and exploitation adaptively.
This theoretical advantage of our method will be further confirmed in our visualization experiment (Figure~\ref{see} and Section~\ref{vis}).

\subsection{\ourmethod{}’s Framework}
\label{ourframework}
Inspired by the above adaptive mechanisms, we formalize an end-to-end exploration-exploitation dynamic adaptive framework — \ourmethod{}. As illustrated in Figure~\ref{framework}, the \ourmethod{} framework consists of three parts: a state autoencoder, a mastery evaluation network and the policy optimization mechanism (referred to as the adaptive mechanism).

\paragraph{State Autoencoder.} The state autoencoder aims to motivate the agent's exploratory behavior of searching for unknown policies to get familiar with the environment. As shown in Figure~\ref{framework} (A), the state autoencoder receives raw states $s$ via the interaction between the agent and the environment and learns to reconstruct the input images $\hat{s}=g_\theta(s)$, where $\theta$ is network parameters. The reconstruction error $R_{\text{int}}(s,a)$ is defined as follows: 
\begin{equation*}
R_{\text{int}}(s,a) =\mathcal{L}_g(\theta) = \frac{1}{2} \left\| s -  g_\theta(s) \right\|_2^2 .
\end{equation*}

We use the reconstruction error as intrinsic rewards to drive exploration.
A larger error indicates insufficient training of the network on the state, and hence enhanced exploration is needed for this state. Oppositely, a smaller error indicates that the training on that state has been sufficient, leading to an automatic decrease of intrinsic reward.
Note that we only need a single network to estimate intrinsic rewards in contrast to competitive methods like RND and NGU. Moreover, our state autoencoder operates on raw images rather than latent representations of states, leading to a more accurate measurement of state novelty.

\paragraph{Mastery Evaluation Network.}
\begin{figure*}
    \centering
    \includegraphics[width=0.81\linewidth]{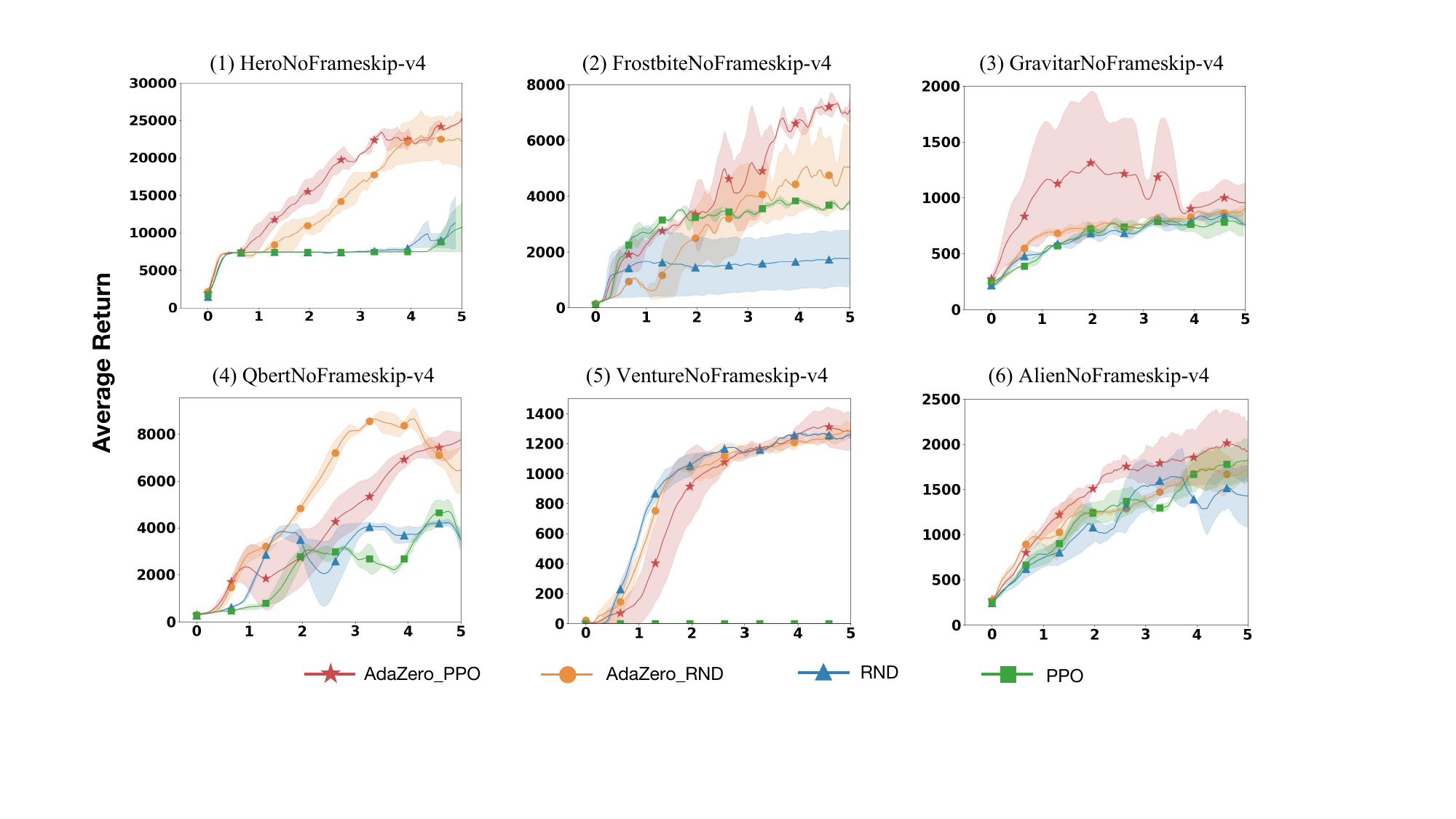}
    \caption{\textbf{Generalization experiments in discrete-space environments (Atari).} The x-axis represents timesteps in 10 million.
    }
    \label{atari_hard}
\end{figure*}

While conducting effective exploration is essential in environments with sparse rewards and delayed feedback, solely focusing on exploration without exploitation can potentially reducing learning efficiency as the agent may spend excessive time exploring ultimately ineffective policies. Therefore, correctly determining where and how much to explore based on the agent's mastery level of the states is crucial for the agent's performance. To this end, \ourmethod{} involves a mastery evaluation network to provide real-time assessment of the necessity of exploration, as shown in Figure~\ref{framework} (B).

The mastery network is a binary classifier denoted as $f_{\omega}$. At the training stage, the network receives mixed data of real state images and fake state images reconstructed by the state autoencoder, denoted as $s$ and $\hat{s}$ respectively. It is trained to distinguish between the two kinds of inputs and output the probability of the inputs being real images, with a cross-entropy loss defined as follows:
\begin{equation*}
\mathcal{L}_f(\omega)=\mathbb{E}_{s \sim \mathcal{S}}[\log (1-f_{\omega}(s))]+\mathbb{E}_{\hat{s} \sim \hat{\mathcal{S}}}\left[\log \left(f_{\omega}(\hat{s})\right)\right],
\end{equation*}
where $\omega$ is the network's parameters, $\mathcal{S}$ and $\hat{\mathcal{S}}$ are the set of real state images and reconstructed images respectively.

At the inference stage, the evaluation network only receives the reconstructed images from the state autoencoder and outputs the probability $\alpha(\hat{s}) \triangleq f_{\omega}(\hat{s}) $. Hence, $\alpha(\hat{s}) \in [0, 1]$ can naturally serve as a measure of \ourmethod{}'s grasp of the current training state information.

\paragraph{Adaptive Mechanisms.}
Inspired by Eqn.~\ref{eqn6} and Theorem~\ref{thm2}, By integrating the reconstruction error $R_{\text{int}}$ provided by the state autoencoder and the estimated level of mastery of the state information $\alpha(\hat{s})$ from the evaluation network, we implement the adaptive mechanism of \ourmethod{} depicted in Figure~\ref{framework} (C) through the following formula:
\begin{equation}
\label{int}
  R_{\text{total}}(s,a) = R_{\text{ext}}(s,a) +  (1-\alpha(\hat{s}))R_{\text{int}}(s,a) .
\end{equation}

When the agent reaches a high level of mastery in current state, the exploration intensity should be reduced to increase the exploitation ratio. Conversely, when the level of mastery is low, the exploration intensity should be increased to encourage the agent to better learn from the various information about the state. As $\alpha(\hat{s}) \in [0,1]$ reflects the agent's ability to grasp valuable information in the reconstructed states, we leverage $(1-\alpha(\hat{s}))$ to adaptively control the strength of intrinsic rewards, and hence dynamically balance the exploration and exploitation.
Note that compared with methods that simply decays intrinsic rewards~\cite{ICLR20NeverGiveUp,yan2023mnemonic}, the adaptive mechanism of \ourmethod{} is entirely realized by end-to-end networks without any need to manually design a decaying scheme. It can adaptively increase or decrease intrinsic rewards according to the training progress and environment changes.

\section{Evaluation Experiments}
\label{exp}


We conduct experiments on 65 different RL tasks, range from dense to sparse reward environments, from continuous to discrete action spaces, and from performance comparison to visualization analyses, which validate the applicability and effectiveness of our \ourmethod{} framework and provide empirical support our theoretical analysis.

We choose to compare \ourmethod{} with the most effective and commonly-used RL algorithms, including traditional PPO~\cite{schulman2017proximal}, competitive exploration-centric RND~\cite{burda2018exploration}, and the recent exploitation-centric RPO~\cite{reflectivepo}.
As \ourmethod{} is designed to be flexible and can be easily integrated into existing algorithms, 
we first compare the performance of the baselines with versus without \ourmethod{} integrated.
Then we focus on PPO with \ourmethod{} as a representative of our method. In the remainder of this paper, \ourmethod{} refers to PPO integrated with \ourmethod{} unless otherwise specified.

For each environment, we run three times with different random seeds and each containing 50 million steps as commonly done in this field. We draw the curves of average return across steps with the solid lines where the shading on each curve represents the variance across three runs.
\begin{figure}
    \centering
    \includegraphics[width=0.99\linewidth]{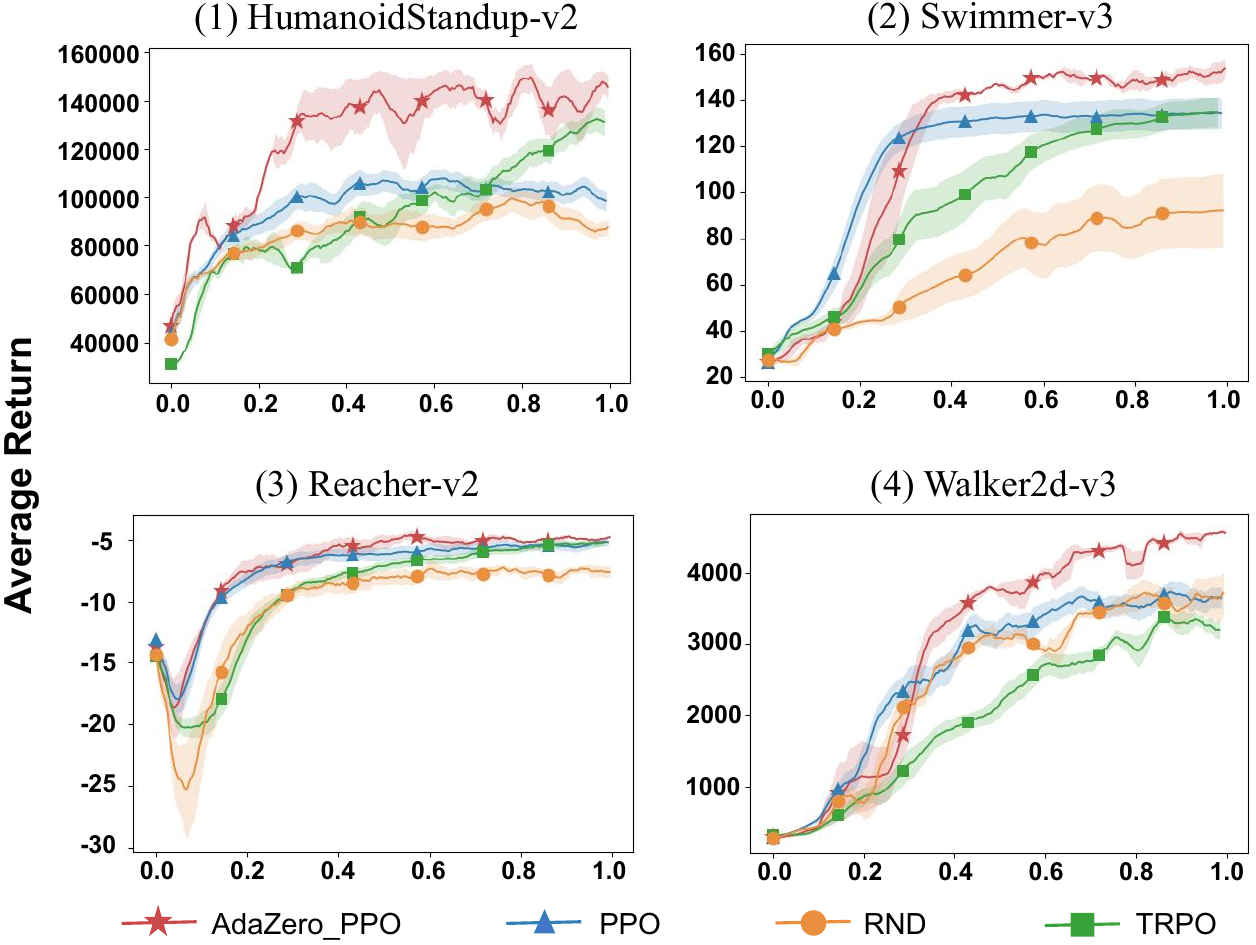}
    \caption{\textbf{Generalization experiments in MuJuCo}, showing the generalizable performance of \ourmethod{} in different types of \textbf{continuous space tasks}. The x-axis represents timesteps in million.
    }
    \label{mujuco}
\end{figure}

\begin{figure}
    \centering
    \includegraphics[width=0.99\linewidth]{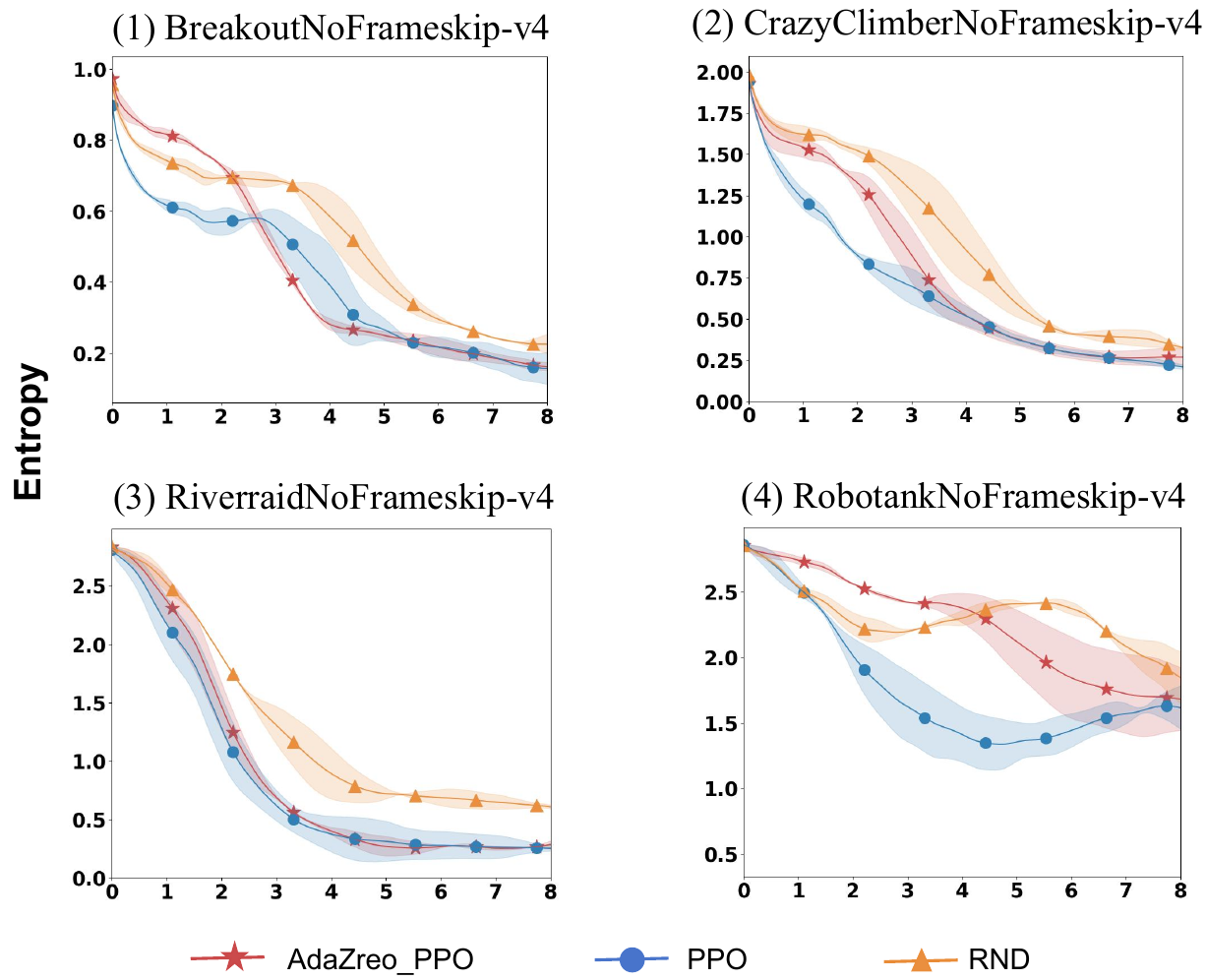}
    \caption{\textbf{Entropy Visualization} of \ourmethod{} vs. PPO and RND in Atari. The x-axis represents timesteps in million.
    }
    \label{see}
\end{figure}

\subsection{Main Experiments}

We conducted integration experiments in the most challenging RL environment, Montezuma's Revenge. The results are shown in Figure~\ref{main_result}. Overall, \ourmethod{} boosts the total returns up to about fifteen times for PPO and 3.5 times for RND (Figure~\ref{main_result} (a)). Besides, \ourmethod{} achieves more than twice returns compared with other advanced baselines (Figure~\ref{main_result} (b)). The results demonstrate that within the same training steps, \ourmethod{} significantly surpasses other algorithms in terms of final returns.

The integration experiment results in Figure~\ref{main_result} (a) also confirm \ourmethod{}'s effective integratability and superior adaptability in highly challenging environments.
Compared with algorithms like PPO and RPO, which primarily focus on exploitation, \ourmethod{} maintains a high exploitation rate and also supports selective exploration. Compared with the exploration-centric RND, \ourmethod{} enhances its exploitation ability. Moreover, it can be observed that there are more horizontal fluctuations in curves of the methods integrated with \ourmethod{}. This indicates that our dynamic adaptive mechanism enables the agent to discover more localized policies during training, thereby enhancing the overall training efficiency. Without \ourmethod{}, PPO and RPO would be trapped in invalid policies and almost fail.

\subsection{Generalization Experiments}
To demonstrate \ourmethod{}'s broad applicability, this subsection presents its performance in a number of well-known challenging RL environments, including all the other 58 discrete-space Atari games as well as the continuous-space environments of MuJoCo.

Figure~\ref{atari_hard} demonstrates the performance of \ourmethod{} in six well-known challenging Atari environments apart from Montezuma. \ourmethod{} significantly improves performance in almost all environments, except for Venture where the improvement is less pronounced. The improvement is most notable in Hero, where the performance is approximately four times better than the baseline. The results indicates that \ourmethod{} is effective in challenging discrete space tasks. The additional results for the remaining 52 Atari environments is included in Appendix~\ref{supplementary_experiment}.

Figure~\ref{mujuco} shows the performance of \ourmethod{} in continuous action spaces. The results indicate that our algorithm outperforms advanced algorithms in the HumanoidStandup, Swimmer, Reacher, and Walker environments. These results demonstrate that \ourmethod{} is also broadly effective in continuous spaces.

\section{Visualization Analysis}

To provide empirical support for our theoretical findings and to validate the effectiveness of our proposed self-adaptive mechanism, in this section, we present a series of visualization analyses to reveal the influence of entropy in policy optimization and how \ourmethod{}'s automatic adaptive exploration and exploitation effectively works.

\begin{figure}
    \centering
    \includegraphics[width=0.9\linewidth]{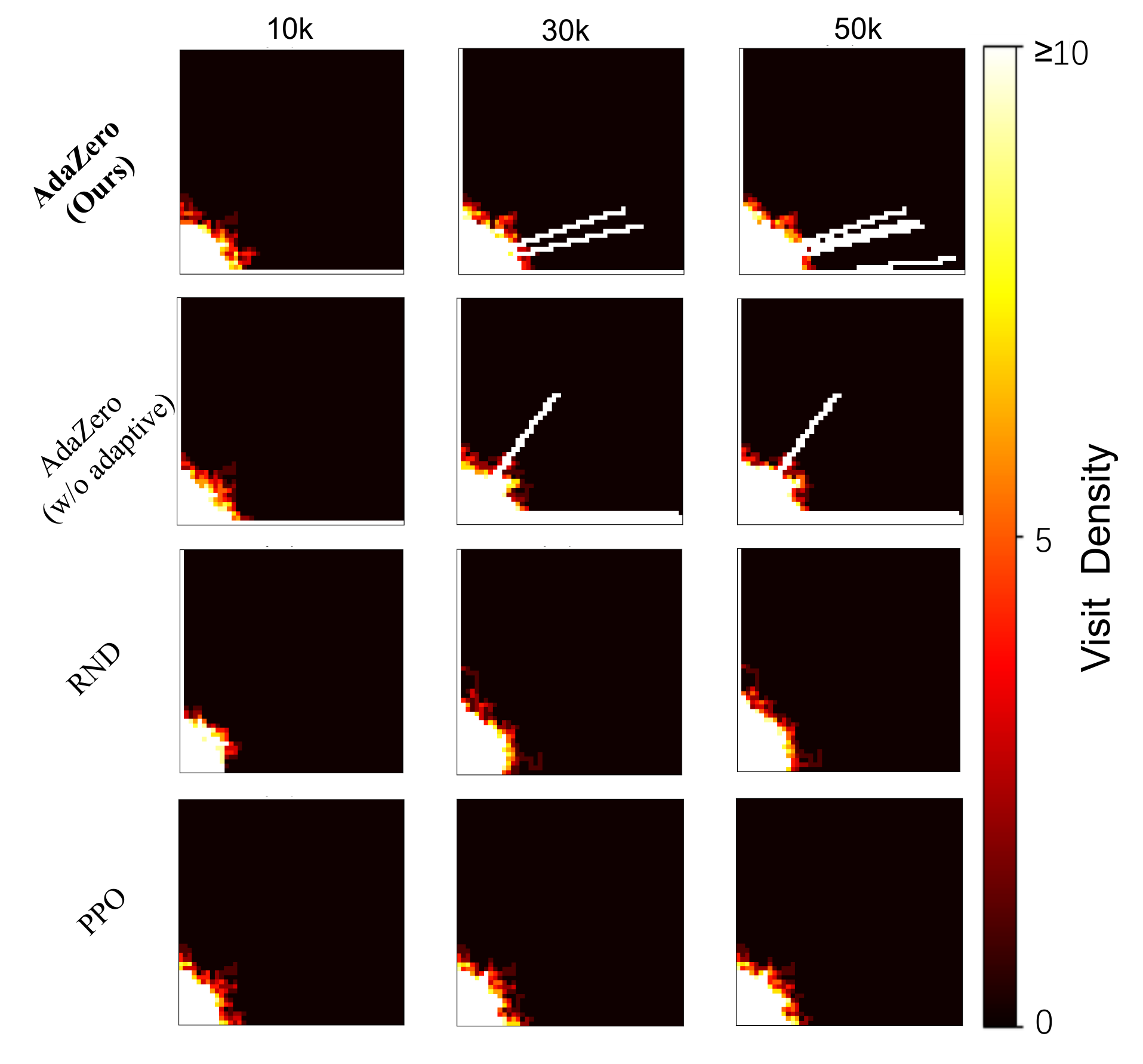}
    \caption{\textbf{Exploration Visualization in Dark Chamber} in 50k steps. The exploration radius of \ourmethod{} with adaptive mechanism is the largest, leading to the largest number of valid paths.
    }
    \label{darkchamber}
\end{figure}

\subsection{Dynamic Adaptive Visualization}
\label{vis}
To more clearly demonstrate the entropy changes in \ourmethod{} during the training process and to provide emperical evidence for the proof in subsection \ref{adapt}, we visualize the entropy curves of \ourmethod{} and baseline algorithms in four commonly used Atari games.

Based on the visualization experiment in Figure \ref{see}, we observed that the entropy of \ourmethod{} is lower than that of the representative exploration method RND. Compared to the traditional exploitation-centric PPO, the entropy of \ourmethod{} during training may be higher or lower than that of PPO.
When \ourmethod{}'s entropy lies between PPO and RND, it is in a dynamic adaptive phase of exploration and exploitation.

It is particularly noteworthy that the entropy of \ourmethod{} is even lower than that of the pure exploitation PPO algorithm in many cases. This is because \ourmethod{} can find more certain and better policies through the support of adaptive mechanisms, resulting in extremely low entropy.

\begin{figure}
    \centering
    \includegraphics[width=0.90\linewidth]{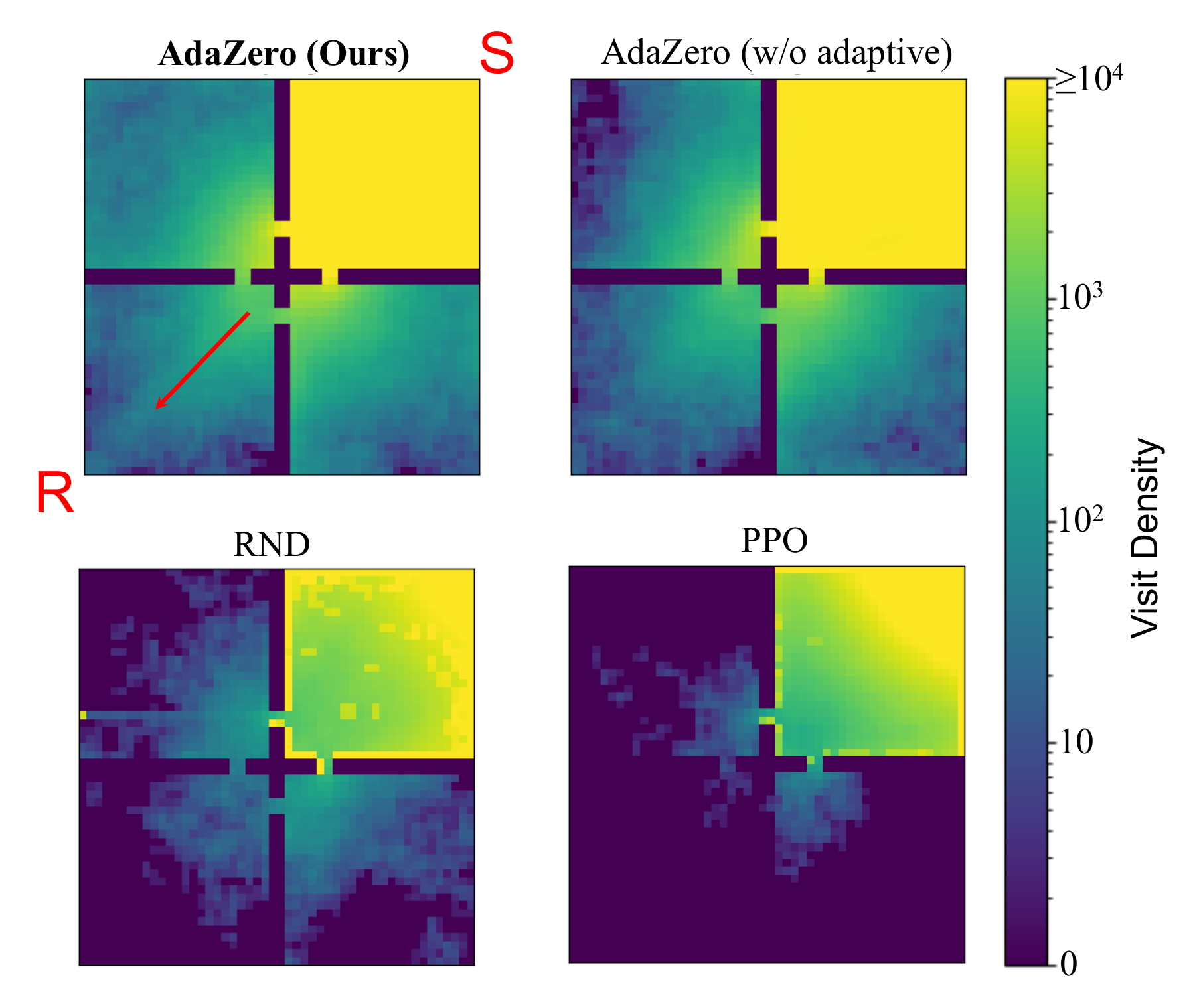}
    \caption{\textbf{Visualization Experiments in Four Rooms} in 2.5 million steps. \ourmethod{} can form an efficient strategy heading for the reward. 
    }
    \label{fourrooms}
\end{figure}

\subsection{Entropy Variation on Policy Optimization}
As shown in Figure~\ref{alignment} in the Appendix, we visualize the entropy changes along with the changes of performance and intrinsic rewards. We also present how the state mastery indicator $\alpha(\hat{s})$ changes during the policy optimization process.
The highlighted area (ii) in Figure~\ref{alignment} (a) reveals that as entropy stabilizes and decreases, performance gradually improves as the agent shifts from exploration to exploitation and discovers and reinforces high-reward strategies. Notably, the highlighted area (i) of Figure~\ref{alignment} (a) indicates that when entropy increases the return decreases, supporting the argument that excessive exploration at the expense of exploitation can ultimately hinder exploration itself. This finding is further confirmed by the visualization in Figure~\ref{darkchamber}, where the exploration areas of AdaZero and its ablation version without adaptive, are compared in the Four Rooms environment.

In addition, Figure~\ref{alignment} (b) shows the consistent trend between entropy and intrinsic rewards, providing experimental evidence for Lemma~\ref{thm1} in this paper.
Figure~\ref{alignment} (c) presents the changes in the state mastery indicator $\alpha(\hat{s})$ during training, where it can be observed that when $\alpha(\hat{s})$ reaches 1 in several instances, AdaZero has fully mastered the current state information, eliminating the need for further exploration, and intrinsic rewards vanish as the strategy shifts to complete exploitation.

\subsection{Effectiveness of Exploration}
To more clearly demonstrate AdaZero's exploration capability, we first constructed a \emph{Dark Chamber} environment, as shown in Figure~\ref{darkchamber}. The environment is 50x50 in size without any reward or penalty, with the agent starting its exploration from the bottom-left corner. We provide a state visit density map to illustrate the agent's exploration ability in the absence of rewards.

Moreover, to evaluate the agent's deep exploration capability in a sparse-reward environment, we conducted a visualization experiment in the \emph{Four Rooms} environment (Figure~\ref{fourrooms0} in Appendix). The agent starts at point s in the top-right corner, explores the environment to avoid wall obstacles, and eventually reaches the point in the bottom-left corner to obtain a reward. As shown in Figure~\ref{fourrooms}, AdaZero achieves the largest exploration area, nearly completing a full exploration of the environment and establishing a stable shortest-path strategy. The direction indicated by the arrows shows that our method forms the shortest path, demonstrating AdaZero's superior exploration ability. In contrast, the ablation version, AdaZero without adaptive, though performing better than the baseline, has a smaller exploration area and a less prominent shortest path than AdaZero, highlighting the importance of AdaZero's adaptive mechanism.
\section{Related Work}


Sparse reward environments have long posed a significant challenge in RL. Researchers have explored various methods to address this issue, with intrinsic rewards\cite{ICLR19ERND} and maximum entropy approaches\cite{ICML22MSEExploration} being among the most effective. The former encourages agents to discover novel states by measuring novelty through state visit density, while the latter enhances exploration quality by promoting randomness in policies. Despite progress, these methods still struggle with adaptive coordination between exploration and exploitation.

A few studies have attempted to address this challenge. For instance, intrinsic reward methods\cite{ICLR20NeverGiveUp} with decay factors introduce an annealing-like mechanism into the reward function, and approaches like decoupled reinforcement learning\cite{schafer2021decoupling} separately optimize exploration and exploitation strategies. However, these methods fail to achieve a truly adaptive balance between exploration and exploitation. The primary limitation is that the intrinsic motivation terms cannot fully adaptively decay to zero based on the environment and training progress; instead, they gradually reduce to zero. Moreover, these methods rely on manually designed decay coefficients or strategy separation and merging based on human experience.
For a detailed discussion of related literature, refer to Appendix A for additional related works.

\section{Conclusion}
This paper demonstrates the role of entropy in policy optimization. Based on this motivation, we re-examined the exploration-exploitation dilemma from the perspective of entropy and proposed an adaptive exploration-exploitation mechanism. Inspired by this mechanism, we formalized the \ourmethod{} which achieves adaptation based on end-to-end networks. and conducted extensive experiments in up to 65 environments. Both theoretical analysis and experimental results demonstrate that, compared to traditional RL algorithms, our method enables selective exploration while exploiting. Compared to exploration-dominant algorithms, our method allows for selective exploitation while exploring.


\bibliography{aaai25}
\clearpage
\appendix

\section{Monotonicity of Entropy}
\label{monotonicity}
The entropy of the policy $\pi$ is defined as:
\[ H(\pi|s) = -\sum_{a \in \mathcal{A}} \pi(a|s) \log \pi(a|s). \]

For the scenario of two actions, consider the following constrained optimization problem:

\begin{equation}
\begin{array}{ll} 
& \max H(\pi|s) \\
\text { s.t. } & \pi(a_1|s) + \pi(a_2|s) = 1.
\end{array}
\end{equation}

Using the Lagrange function method, we can define $ L(\pi) = -(\pi(a_1|s) \log \pi(a_1|s) + \pi(a_2|s) \log \pi(a_2|s)) + \beta (\pi(a_1|s) + \pi(a_2|s) - 1) $, where $\beta$ is the coefficient.
Taking the partial derivatives of the above function with respect to the policy, we have
\begin{equation}\label{piandao}
\frac{\partial L(\pi)}{\partial \pi(a_i|s)} = -(\log \pi(a_i|s) + 1) + \beta, \quad i=1,2.
\end{equation}
Setting the partial derivative to zero, we get $ \pi(a_1|s) = \pi(a_2|s) = e^{\beta - 1} = \frac{1}{2} $. 
We can see that the entropy increases monotonically with respect to $\pi(a_i|s)$ in the interval (0, 0.5) and decreases monotonically in the interval (0.5, 1).

\section{Supplementary Experimental Details}
\label{supplementary_experiment}
\subsection{\ourmethod{}'s algorithm description}
\begin{algorithm}[H] 
	\begin{algorithmic}[0] 
		\STATE The real-time feedback state from the environment is defined as Initial States $s$, with its corresponding Reconstructed State $\hat{s}$ and Reconstruction Error as intrinsic reward $R_{int}$; $\alpha(\hat{s})$ is the measure of understanding of the current state.
		\STATE Initialize state autoencoder parameter $ \theta $ and evaluation network parameter $ \omega $.
		\FOR{$t=0,1,2,\ldots$}
		\STATE 	\textbf{Generative Network}:
		\STATE Receive real-time state $s$ from the environment
		\STATE Input $s$ into the state autoencoder to obtain the output $\hat{s}$ and intrinsic reward $R_{int}$
		\STATE Update the loss function: $\text{Loss}_f(\theta) = \frac{1}{2} \left\| s_t -  g_\theta(s_t) \right\|_2^2$
		
		\STATE \textbf{Evaluation Network}:
		\STATE Receive the output $\hat{s}$ from the state autoencoder as input
		\STATE Output $\alpha(\hat{s})$
		\STATE Update the loss function: $\text{Loss}_g(\omega) =\mathbb{E} \left[ y(s) \cdot \log(f_{\omega}(s)) + (1 - y(s)) \cdot \log(1 - f_{\omega}(s)) \right]$
		
		\STATE \textbf{Adaptive Mechanism}:
		\STATE The environment provides extrinsic reward $R(s, a)$, the state autoencoder provides $R_{\text{int}}(s, a)$, and the evaluation network provides $\alpha(\hat{s})$
		\STATE Redefine the intrinsic reward to balance exploration and exploitation adaptively:
		\[ R_{\text{total}}(s, a) = R(s, a) +  (1-\alpha(\hat{s}))R_{\text{int}}(s, a) \]
		\ENDFOR
	\end{algorithmic}
	\caption{Exploration and Exploitation Adaptive Balance Method (\ourmethod{})}\label{EEadapt-algorithm}
\end{algorithm}


\begin{figure}
	\centering
	\includegraphics[width=0.9\linewidth]{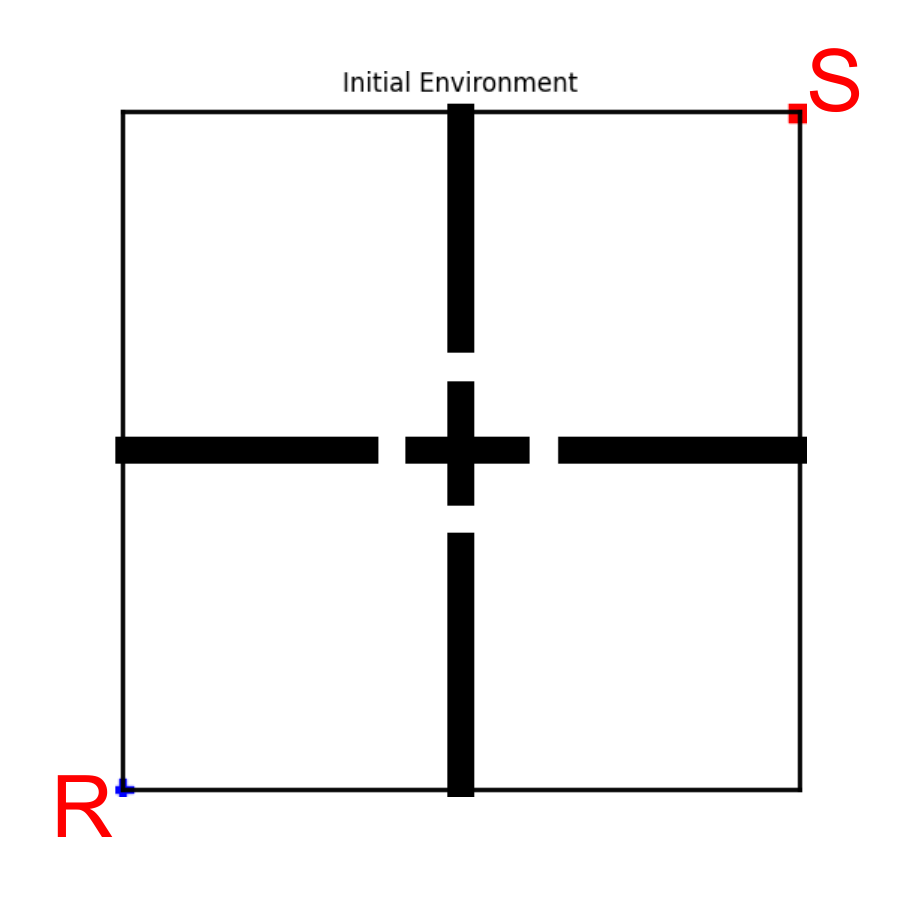}
	\caption{\textbf{Visual Experiments in Dark Chamber} in 2.5 million steps.
	}
	\label{fourrooms0}
\end{figure}

\begin{figure*}
	\centering
	\includegraphics[width=0.9\linewidth]{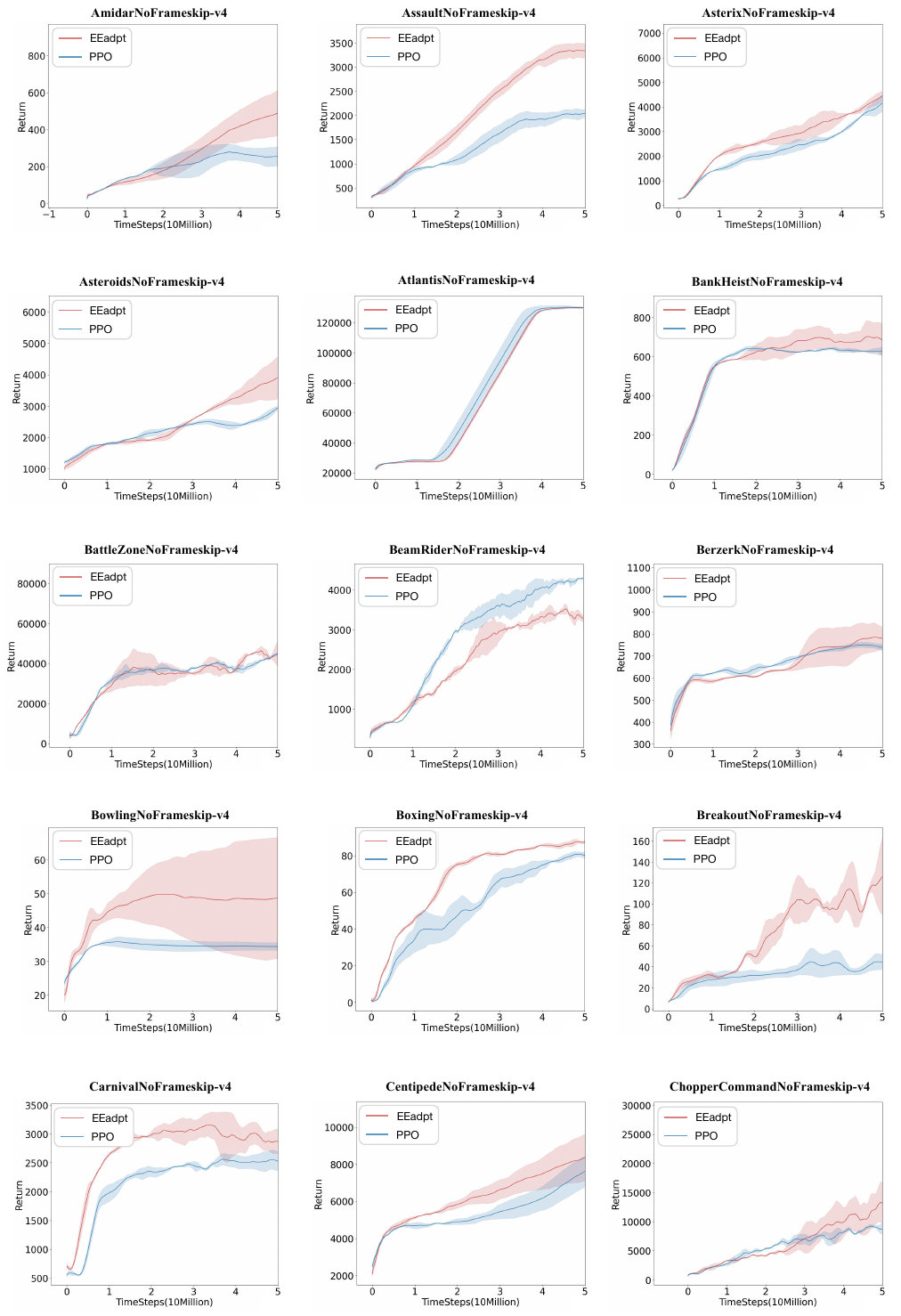}
	\caption{\textbf{Supplementary Expansion Experiments in Atari Part I.} Beyond the Expansion Experiments described in the main text, a comparison of scores across all remaining Atari environments. 
	}
	\label{atari}
\end{figure*}

\begin{figure*}
	\centering
	\includegraphics[width=0.9\linewidth]{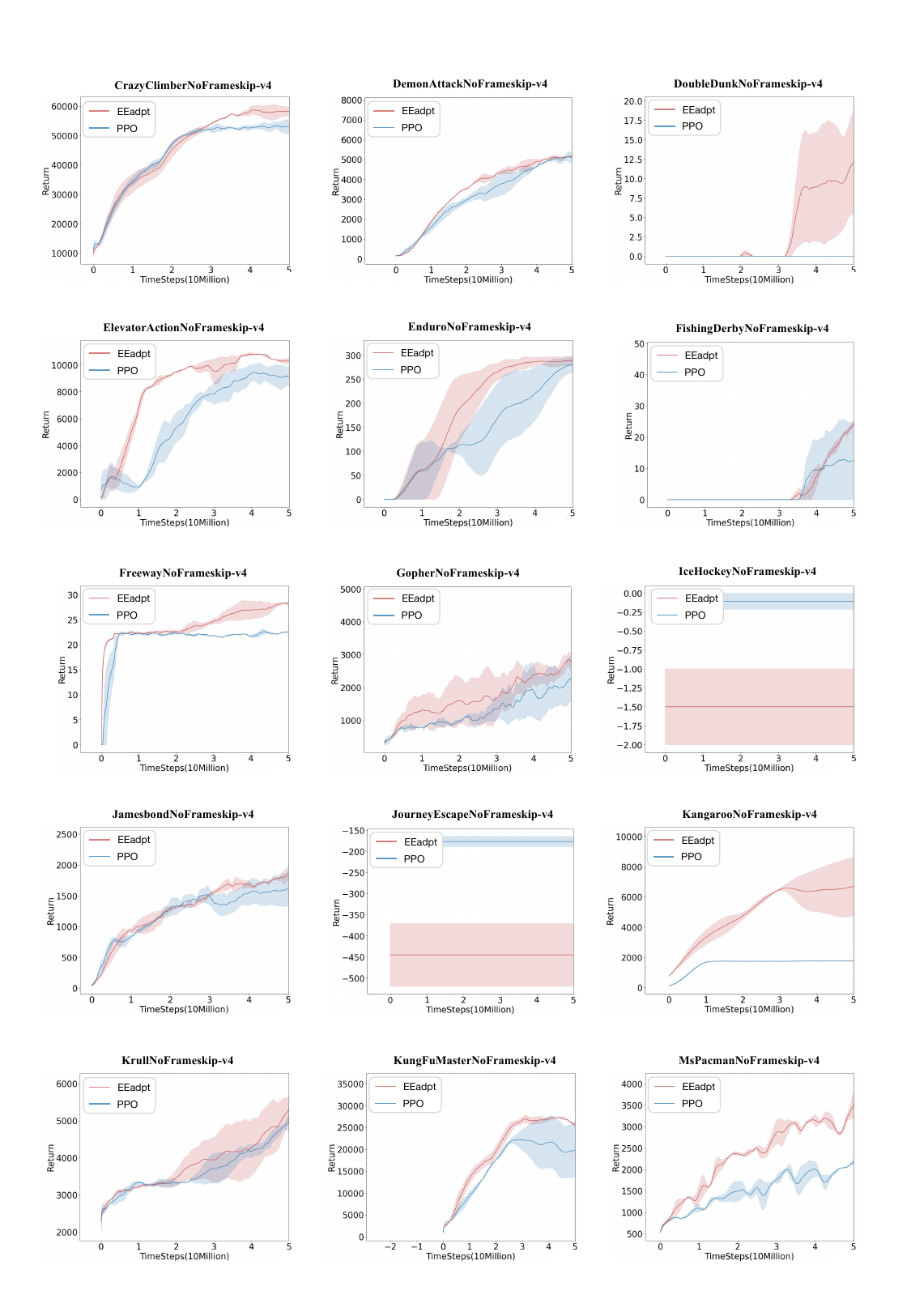}
	\caption{\textbf{Supplementary Expansion Experiments in Atari Part II.} Beyond the Expansion Experiments described in the main text, a comparison of scores across all remaining Atari environments. 
	}
	\label{atari}
\end{figure*}

\begin{figure*}
	\centering
	\includegraphics[width=0.9\linewidth]{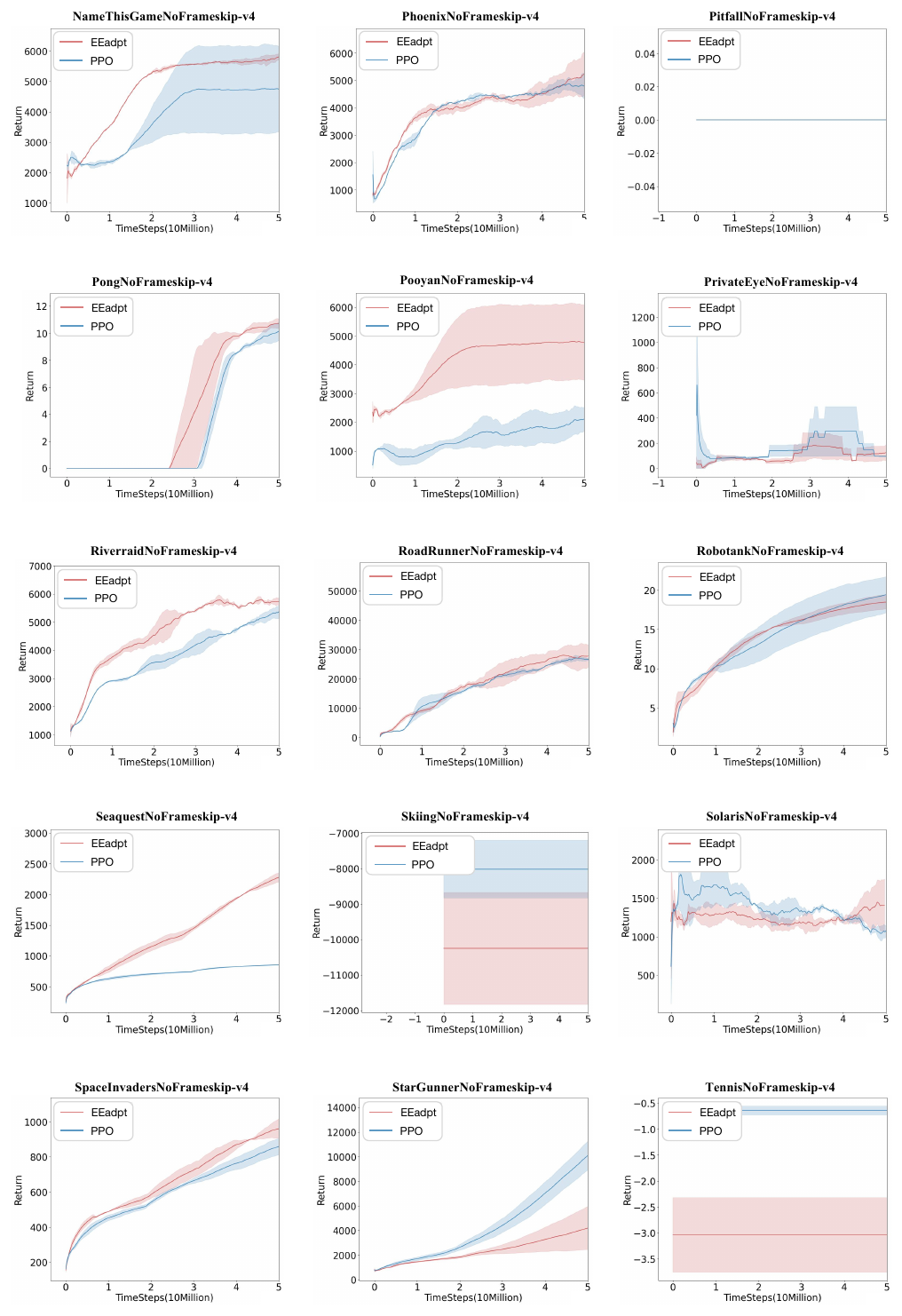}
	\caption{\textbf{Supplementary Expansion Experiments in Atari Part III.} Beyond the Expansion Experiments described in the main text, a comparison of scores across all remaining Atari environments. 
	}
	\label{atari}
\end{figure*}

\begin{figure*}
	\centering
	\includegraphics[width=0.9\linewidth]{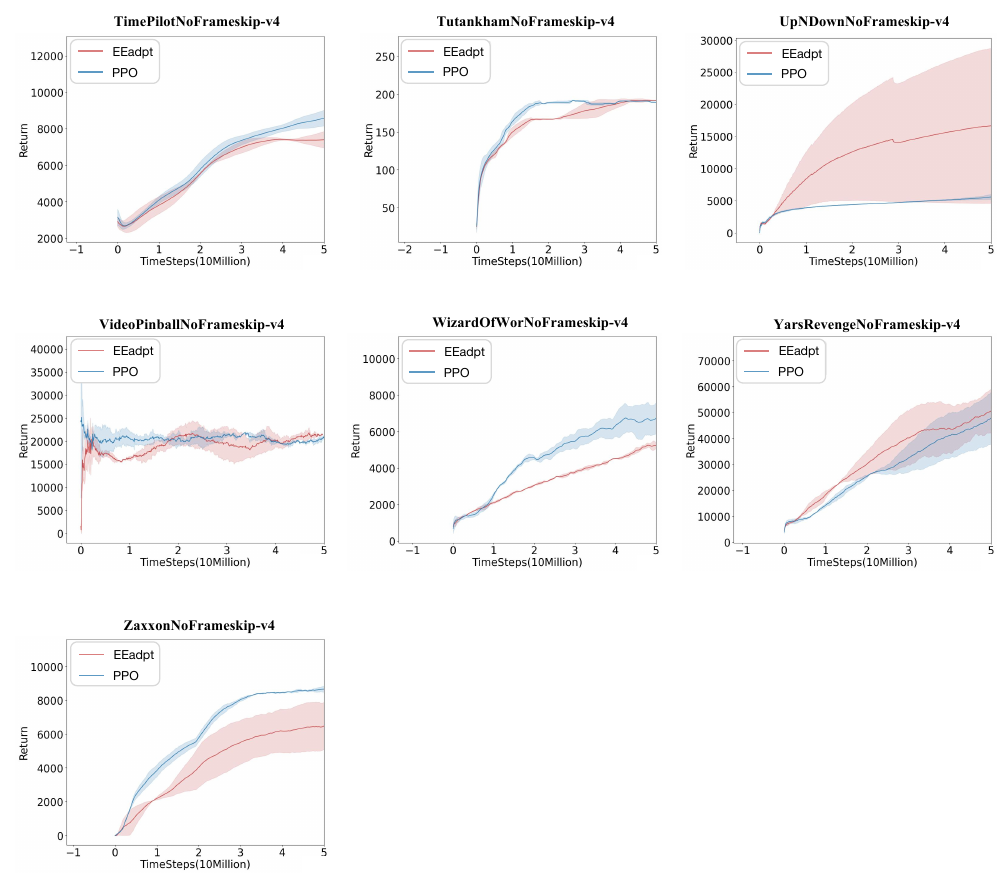}
	\caption{\textbf{Supplementary Expansion Experiments in Atari Part IV.} Beyond the Expansion Experiments described in the main text, a comparison of scores across all remaining Atari environments. 
	}
	\label{atari}
\end{figure*}

\begin{figure*}
	\centering
	\includegraphics[width=0.99\linewidth]{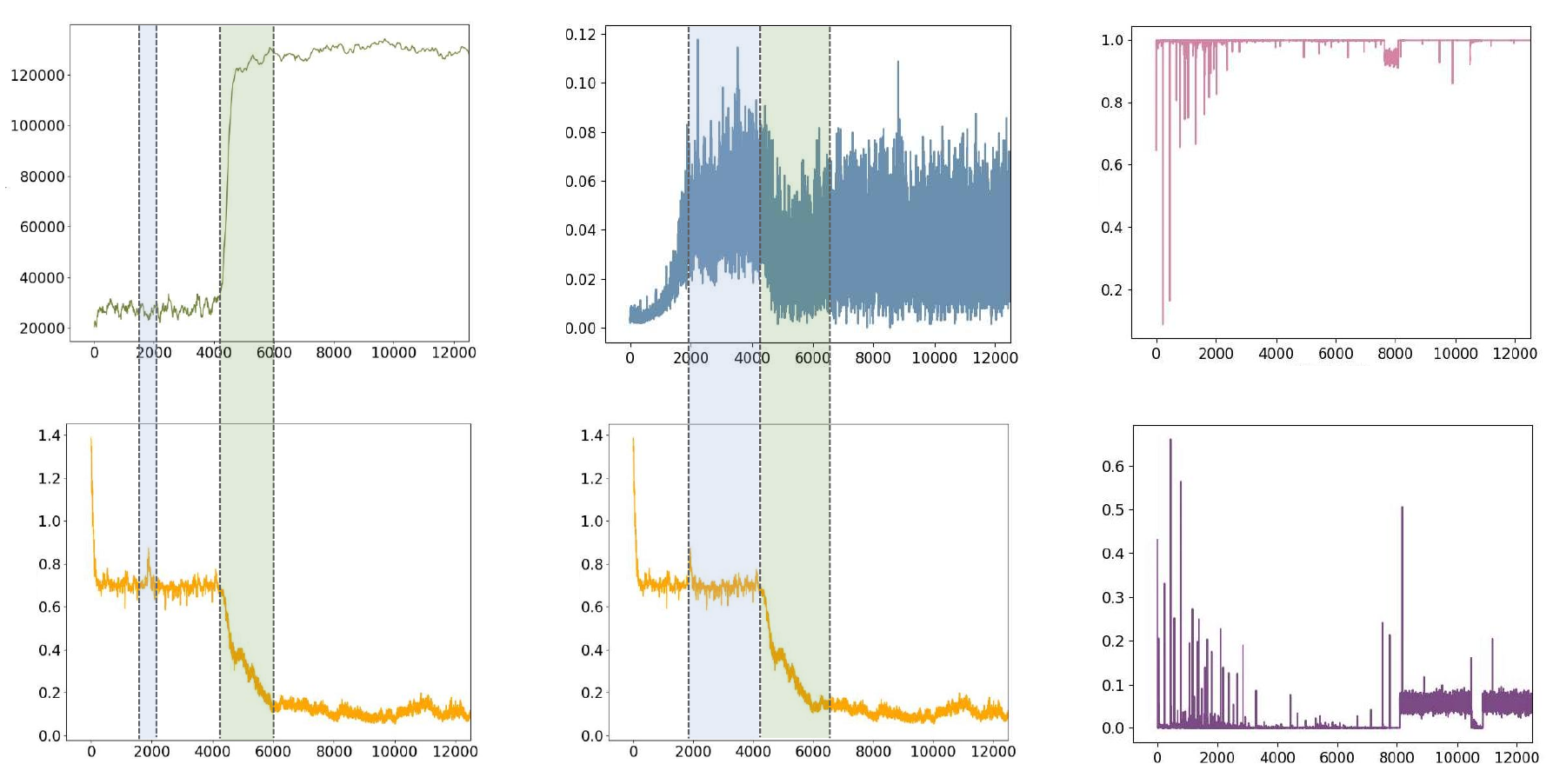}
	\caption{Entropy Variation on Policy Optimization. To more clearly illustrate the subtle changes in policy optimization, the data are presented without averaging multiple seeds or applying any smoothing operations.}
	\label{alignment}
\end{figure*}

\end{document}